\newtheorem{theorem}{Theorem}
\DeclareMathOperator*{\arginf}{\mathrm{arg inf}}
\newcommand{\E}{\mathbb{E}}
\newcommand{\divergence}[2]{D\!\left(#1 \middle\| #2 \right)}
\newcommand{\KL}[2]{D_{KL}\!\left(#1 \middle\| #2 \right)}
\newcommand{\ELBO}[2]{\text{ELBO}\!\left(#1, #2 \right)}
\newcommand{\mean}[2]{\E_{#2} \! \left[ #1 \right]}
\newcommand{\varder}[2]{\nabla_#1 #2}
\renewcommand{\d}[1]{\ensuremath{\operatorname{d}\!{#1}}}
\title{Forward Amortized Inference for Likelihood-Free Variational Marginalization}
\author{
 Luca Ambrogioni \\
 Radboud University\\
 \texttt{l.ambrogioni@donders.ru.nl} \\
 \And
 Umut Güçlü \\
 Radboud University\\
 \texttt{u.guclu@donders.ru.nl} \
 \And
 Julia Berezutskaya  \\
 University of Utrecht\\
 \texttt{y.berezutskaya@umcutrecht.nl} \\
 \And
 Eva W. P. van den Borne \\
 Radboud University\\
 \texttt{e.vandenborne@student.ru.nl} \
 \And
 Yağmur Güçlütürk \\
 Radboud University\\
 \texttt{y.gucluturk@donders.ru.nl} \
 \And
 Max Hinne \\
 University of Amsterdam\\
 \texttt{m.hinne@uva.nl} \\
 \And
 Eric Maris \\
 Radboud University\\
 \texttt{e.maris@donders.ru.nl} \\
 \And
 Marcel A. J. van Gerven \\
 Radboud University\\
 \texttt{m.vangerven@donders.ru.nl} \\
}
\begin{document}

\maketitle
\begin{abstract}
In this paper, we introduce a new form of amortized variational inference by using the forward KL divergence in a joint-contrastive variational loss. The resulting forward amortized variational inference is a likelihood-free method as its gradient can be sampled without bias and without requiring any evaluation of either the model joint distribution or its derivatives. We prove that our new variational loss is optimized by the exact posterior marginals in the fully factorized mean-field approximation, a property that is not shared with the more conventional reverse KL inference. Furthermore, we show that forward amortized inference can be easily marginalized over large families of latent variables in order to obtain a marginalized variational posterior. We consider two examples of variational marginalization. In our first example we train a Bayesian forecaster for predicting a simplified chaotic model of atmospheric convection. In the second example we train an amortized variational approximation of a Bayesian optimal classifier by marginalizing over the model space. The result is a powerful meta-classification network that can solve arbitrary classification problems without further training. 


\end{abstract}
\section{Introduction} 
Bayesian inference is a principled statistical framework for estimating the probability of latent factors given a set of observations. Unfortunately, most complex Bayesian models are intractable since computing the posterior distribution involves the solution of integrals over high-dimensional spaces. Variational inference (VI) is a family of approximation methods that reframes Bayesian inference as an optimization problem that can be solved using stochastic optimization techniques \cite{jordan1999introduction}. Recent developments in stochastic VI have scaled Bayesian inference to massive datasets and paved the way for the integration of deep learning and Bayesian statistics \cite{hoffman2013stochastic, ranganath2014black, rezende2014stochastic, kucukelbir2017automatic, tran2016edward}. In many applications, VI is made more efficient by optimizing a whole family of variational distributions at once \cite{kingma2013auto, huszar2017variational, ritchie2016deep}. This approach is usually referred to as amortized inference. Amortized inference can be seen as a special case of the larger framework of joint-contrastive variational inference \cite{huszar2017variational, dumoulin2016adversarially}. 

In this paper we introduce forward amortized variational inference (FAVI) as a flexible and tractable new form of likelihood-free VI. FAVI is obtained by using the forward KL divergence on a joint-contrastive variational loss. One of the most important features of FAVI is that it can be used for marginalizing over a large space of nuisance variables without explicitly modeling their joint density. Marginalization of nuisance variables is important in many real-world problems such as weather forecasting \cite{gneiting2005weather}. FAVI is particularly suitable for model-based problems such as weather forecasting because it is trained on samples from the generative model. However, the applicability of FAVI goes far beyond model-based problems. As an example of a model-free problem, we use FAVI to obtain a meta-classifier as a variational approximation of the Bayes optimal classifier of an infinite ensemble of classification models. The resulting variational meta-classifier is algorithmically similar to the meta-learning methods introduced in \cite{prokhorov2002adaptive} and recently expanded in \cite{santoro2016meta,vinyals2016matching}. 

\section{Related work}
In spite of its theoretical advantages, the intractability of the expectation in the forward KL divergence $\KL{p(z|x)}{q(z)}$ limits its applicability in the conventional VI framework \cite{christopher2016pattern}. The forward KL is adopted by expectation propagation (EP) methods \cite{minka2001expectation, barthelme2011abc}, but EP is not a form of VI since it does not minimize a global divergence between the two distributions. Likelihood-free Bayesian inference is often based on approximate Bayesian computation (ABC) \cite{tavare1997inferring, pritchard1999population}. Recently the ABC approach has been applied to both VI \cite{tran2017variational} and EP \cite{barthelme2011abc}. However, despite its success in many applications, ABC has some important limitations. In particular, the efficiency of rejection based ABC methods tends to sharply degrade  as the dimensionality grows and the use of low-dimensional summary statistics can severely affect the performance. Similarly, methods based on some form of density estimation such as \cite{shi2018kernel} are strongly affected by the curse of dimensionality since high-dimensional density estimation is notoriously challenging. An alternative approach, which is algorithmically similar to our method, is to treat Bayesian inference as a nonlinear regression problem. This approach was first introduced in \cite{blum2010non} and recently extended in \cite{papamakarios2016fast}. In this latter work, a loss similar to our FAVI loss was iteratively optimized using an importance sampling scheme so that the simulator ($p(z,x)$ in our notation) gradually narrows down to the distribution of the observed data. Note that this work does not draw any connection with VI and their importance sampling scheme is explicitly designed to avoid inference amortization. In general, the FAVI approach offers a theoretical foundation to several previous works based on training deep networks on simulated data \cite{le2017using, jaderberg2014synthetic, jaderberg2016reading, gupta2016synthetic, stark2015captcha, guccluturk2016convolutional, ambrogioni2017estimating}. Most of the recent literature about likelihood-free approximate Bayesian inference is based on adversarial training. This line of research was initiated by adversarially learned inference (ALI) which can be shown to minimize the Jensen-Shannon divergence at the limit of an optimal discriminator \cite{dumoulin2016adversarially}. Several other adversarial VI methods have recently been introduced \cite{mescheder2017adversarial, tran2017hierarchical, huszar2017variational}. These variational methods share some of the flexibility of FAVI, but they usually require the samples from $p$ to be differentiable. A drawback of adversarial methods is that the adversarial minimax problem is equivalent to the minimization of a divergence only in the nonparametric limit \cite{goodfellow2014generative, mescheder2017adversarial}. From a practical perspective, variational methods tend to generate very realistic samples, but often suffer from instability during training and mode collapse \cite{arora2018do, arora2017generalization}. 

\section{Background on joint-contrastive variational inference}
Joint-contrastive variational inference was first introduced in the context of ALI \cite{dumoulin2016adversarially} and more explicitly outlined in \cite{huszar2017variational}. The loss functional of joint-contrastive variational inference is a divergence between the model joint distribution and a joint variational distribution:
\begin{equation}\label{eq: joint divergence}
\mathcal{L}_{j}[p,q] = \divergence{p(z,x)}{q(z,x)}~.
\end{equation}
Without further constraints the minimization of this loss functional is not particularly useful as the model joint $p(z,x)$ is usually tractable and it does not need to be approximated. The key idea for approximating the intractable posterior $p(z|x)$ by minimizing \ref{eq: joint divergence} is to factorize the variational joint as the product of a variational posterior $q(z|x)$ and the sampling distribution of the data:
\begin{equation}\label{eq: joint variational distribution}
q(x,z) = q(z|x)k(x)~.
\end{equation}
Usually $k(x)$ is a re-sampling distribution of a training set as in the case of variational autoencoders~\cite{kingma2013auto}. Given this factorization, the minimization of \ref{eq: joint divergence} with respect to both $q$ and $p$ simultaneously approximates the model posterior with $q(z|x)$ and the real-word distribution with $p(x)$. Importantly, we can usually sample from both $q(x,z)$ and $p(z,x)$ and this implies that we can stochastically optimize \ref{eq: joint divergence} for a large class of divergence measures. 

\subsection{Amortized inference}
If we adopt the KL divergence in Eq.~\ref{eq: joint variational distribution}, the joint-contrastive variational inference loss decomposes into an evidence loss and an amortized inference loss term:
\begin{equation}\label{eq: reverse KL}
\KL{q(x,z)}{p(x,z)} = \KL{k(x)}{p(x)} + \mean{\KL{q(z|x)}{p(z|x)}}{k(x)}~.
\end{equation}
The result suggests that conventional amortized inference is a special case of joint-contrastive variational inference. We can see this by studying the gradients of Eq.~\ref{eq: reverse KL}. In the following, $\varder{q}{}$ denotes the functional gradient with respect to the density $q$. We use this functional notation in order to avoid referring to an explicit parametrization. Since the term corresponding to the entropy of $k(x)$ in Eq.~\ref{eq: reverse KL} does not depend on $q$, this divergence has the same functional gradient as the (negative) amortized ELBO:
\begin{align}\label{eq: ELBO gradient q}
\varder{q}{\KL{q}{p}} &= \varder{q}{\mean{\log{\frac{q(z|x)}{p(x,z)}}}{q(x,z)}} + \varder{q}{\mean{\log{k(x)}}{k(x)}} \notag \\
&= -\varder{q}{\mean{\ELBO{q}{p}}{k(x)}}~.
\end{align}
Therefore, amortized variational inference is a special case of joint-contrastive variational inference.

\section{Forward amortized variational inference}
The reverse KL divergence has a central position in the classical (posterior-contrastive) variational framework because it leads to a tractable variational lower bound. Conversely, the forward KL divergence is intractable in a posterior-contrastive sense as it requires computation of an expectation with respect to the true posterior. We will now show that the forward KL is tractable when used in a joint-contrastive loss. In this case we obtain the following divergence:
\begin{align} \label{eq: FAVI divergence}
\KL{p(x,z)}{q(x,z)} &= \mean{\log{\frac{p(x,z)}{q(z|x) k(x)}}}{p(x,z)} \notag\\ 
&= -\mean{\log{q(z|x)}}{p(x,z)} +\mean{\log{\frac{p(x,z)}{k(x)}}}{p(x, z)}~.
\end{align}
Note that in this expression there is only one term that depends on $q$. Therefore, by ignoring the constant terms, we can define the FAVI loss as follows:
\begin{equation}\label{eq: FAVI loss}
\mathcal{L}_{FA} = -\mean{\log{q(z|x)}}{p(x,z)}~.
\end{equation}
The resulting functional gradient is given by
\begin{equation}\label{eq: FAVI gradient}
\varder{q}{\mathcal{L}_{FA}} = -\mean{\varder{q}{\log{q(z|x)}}}{p(x,z)}~.
\end{equation}
Note that the computation of this gradient requires neither reparametrization tricks nor black-box methods, since the expectation is taken with respect to $p$ while the gradient is taken with respect to $q$.

The FAVI variational loss can also be derived as an amortized form of posterior-contrastive variational inference. The forward KL posterior-contrastive variational loss is given by:
\begin{equation}\label{eq: FAVI amortized derivation}
\KL{p(z|x)}{q(z|x)} = \mean{\log{\frac{p(z|x)}{q(z|x)}}}{p(z|x)}~.
\end{equation}
It is challenging to obtain unbiased samples from the gradient of this expression as the expectation is taken with respect to the intractable $p(z|x)$. We can recover the FAVI loss (up to a term constant in $q$) if we amortize the problem with respect to the model probability:
\begin{equation}
\mean{\KL{p(z|x)}{q(z|x)}}{p(x)} = -\mean{\log{q(z|x)}}{p(x,z)} + \mean{\log{p(x,z)}}{p(x, z)} ~.
\end{equation}
FAVI has several advantages over reverse amortized inference. First of all, it is very simple to obtain Monte Carlo samples of the gradients of the stochastic loss in Eq.~\ref{eq: FAVI loss}, since the expectation is taken with respect to $p$. This avoids the use of methods such as the reparametrization trick, which limits the family of possible probability distributions and lengthens the computational graph since the loss needs to be back-propagated through the samples \cite{rezende2014stochastic}. Another important advantage is that the model joint probability $p(x,z)$ does not need to be evaluated explicitly. This implies that FAVI can be used when the likelihood is intractable, in situations where ABC methods are usually adopted \cite{csillery2010approximate, blum2010non, marin2012approximate, tran2017variational}. A downside of FAVI is that Eq.~\ref{eq: FAVI divergence} cannot be directly minimized with respect to $p$ since $k(x)$ cannot be expressed in closed form. There are several possible ways for dealing with this problem. In Appendix A we outline an adversarial method that only requires the differentiability of the samples from $p$. Note that the optimization of $p$ is not strictly speaking part of Bayesian inference. Therefore we will focus on the case where the generative model $p$ is known \emph{a priori} in the rest of the paper. One of the most interesting features of FAVI is that its loss is optimized by the exact marginals even when the variational approximation is fully factorized, as we shall demonstrate in the next section.

\subsection{Marginalization properties of FAVI}
In the fully factorized mean field approximation the FAVI loss is minimized by the exact marginals of the true posterior, as stated in the following theorem:
\begin{theorem}[Exact marginals] \label{th: exact marginals}
Consider a joint distribution $p(z, x)$ and a fully factorized variational posterior $q(z|x) = \prod_k q_k(z_k|x)$. The functional $\mathcal{L}_{FA}[p,q]$ is minimized when $q(z|x) = \prod_k p(z_k|x)$ for all $x$ in the support of $p(x)$. Furthermore, the minimizer is unique when all values of $x$ are in the support of $p(x)$.
\end{theorem}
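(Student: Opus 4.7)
The plan is to exploit the fact that the FAVI loss, being an expectation of $\log q(z|x)$ under $p(x,z)$, decouples completely across the factors of a mean-field $q$. The minimization then reduces to a family of elementary cross-entropy minimizations indexed by $x$ and by the coordinate $k$, each of which is solved by Gibbs' inequality.

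Concretely, I would first substitute $q(z|x) = \prod_k q_k(z_k|x)$ into the definition of $\mathcal{L}_{FA}$, turning $\log q(z|x)$ into $\sum_k \log q_k(z_k|x)$ and hence
\begin{equation*}
\mathcal{L}_{FA}[p,q] \;=\; -\sum_k \mean{\log q_k(z_k|x)}{p(x,z)}.
\end{equation*}
Because $\log q_k(z_k|x)$ depends only on $(x, z_k)$, I can integrate out the remaining coordinates and replace $p(x,z)$ by the bivariate marginal $p(x, z_k)$. Writing $p(x, z_k) = p(x)\, p(z_k|x)$ and pulling the outer $p(x)$ out of the integral over $z_k$ gives
\begin{equation*}
\mathcal{L}_{FA}[p,q] \;=\; -\sum_k \mean{\,\mean{\log q_k(z_k|x)}{p(z_k|x)}\,}{p(x)}.
\end{equation*}

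Next I would observe that for every fixed $x$ with $p(x) > 0$, and every $k$, the inner expectation equals $-\mathcal{H}[p(z_k|x)] - \KL{p(z_k|x)}{q_k(z_k|x)}$, so that
\begin{equation*}
\mathcal{L}_{FA}[p,q] \;=\; \sum_k \mean{\KL{p(z_k|x)}{q_k(z_k|x)}}{p(x)} \;+\; C,
\end{equation*}
where $C = -\sum_k \mean{\mathcal{H}[p(z_k|x)]}{p(x)}$ does not depend on $q$. Since each of the $k$ terms is a non-negative integrand (by Gibbs' inequality) weighted by $p(x)$, and since the coordinates $q_k$ are functionally independent, the infimum is attained by pointwise minimization: choose $q_k(\cdot|x) = p(z_k|x)$ for each $k$ and each $x$ with $p(x) > 0$. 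This gives the claimed minimizer $q(z|x) = \prod_k p(z_k|x)$ on the support of $p(x)$.

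For the uniqueness claim, I would invoke the strict form of Gibbs' inequality: $\KL{p(z_k|x)}{q_k(z_k|x)} = 0$ forces $q_k(z_k|x) = p(z_k|x)$ as distributions in $z_k$. When every $x$ lies in the support of $p(x)$, the averaging measure $p(x)$ imposes the equality for all $x$, yielding uniqueness of the optimizer. The main (very mild) obstacle is bookkeeping the qualifier ``for $x$ in the support of $p(x)$'': off the support, the loss is insensitive to $q_k(\cdot|x)$, so the minimizer is only pinned down on the support, which is exactly the form of the statement. No reparametrization, integrability subtleties beyond those already implicit in the definition of the KL divergence, or adversarial arguments are required.
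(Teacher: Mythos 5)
Your proof is correct and follows essentially the same route as the paper's own argument: expand the factorized $\log q$, reduce each term to an expectation over $p(z_k|x)\,p(x)$, and rewrite the loss as a sum of expected KL divergences $\mean{\KL{p(z_k|x)}{q_k(z_k|x)}}{p(x)}$ plus a $q$-independent conditional-entropy constant, then invoke non-negativity and strict positivity of the KL divergence. Your added bookkeeping about behavior of the loss off the support of $p(x)$ is a slight refinement of the same argument, not a different approach.
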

 \begin{proof}
 In the fully factorized case, the FAVI loss can be rewritten as follows:
\begin{align} \label{eq: proof I}
\mathcal{L}_{FA} & = - \mean{ \sum_k\log{q_k(z_k|x)}}{p(x,z)} \notag\\
 & = -\sum_k \mean{\log{q_k(z_k|x)}}{p(z_k|x)p(x)} \notag\\
 & = \sum_k \mean{\KL{p(z_k|x)}{q_k(z_k|x)}}{p(x)} - \sum_k \mean{\log{(p(z_k|x))}}{p(z_k|x)}~.
\end{align}
The conditional entropy term on the right side of the final expression does not depend on $q$ and can therefore be ignored. Since the KL divergence is always non-negative and vanishes only when the two distributions are identically equal, the expectations in the remaining term are equal to zero if and only if $q_k(z_k|x) = p(z_k|x)$ for all $k$ and for all $x$ in the support of $p(x)$.
\end{proof}
The situation is radically different in reverse KL VI where the factorized approximation can lead to a severe underestimation of the uncertainty of the marginals \cite{robert2014machine, christopher2016pattern}. 

Theorem \ref{th: exact marginals} straightforwardly generalizes to variational models that are factorized into two blocks. From this, an important result follows:
\begin{theorem}[Consistent marginalization]
\label{th: consistent marginalization}
Consider a joint distribution $p(z, \xi, x)$ and the (nonparametric) conditionally independent variational model $q(z, \xi|x) = q_{z}(z|x) q_{\xi}(\xi|x)$. The following equality holds:
\begin{equation} \label{eq: theorem I}
\int \arginf_{q} \mathcal{L}_{FA}[p(z,\xi,x),q(z,\xi|x)] d\xi = \arginf_{q_z} \mathcal{L}_{FA}[p(z,x),q_z(z|x)]~.
\end{equation}
\end{theorem}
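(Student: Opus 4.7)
The strategy is to reduce everything to Theorem~\ref{th: exact marginals}. The only subtlety is that Theorem~\ref{th: exact marginals} is stated for a fully factorized model over scalar components $z_k$, whereas here we have only two (possibly vector-valued) blocks $z$ and $\xi$. First I would verify that the proof of Theorem~\ref{th: exact marginals} extends verbatim to any partition of the latent variables into jointly measurable blocks. The derivation in Eq.~\ref{eq: proof I} uses only that $\log \prod_k q_k = \sum_k \log q_k$ and that marginalizing $p(z,x)$ over all but one block produces the corresponding conditional marginal, neither of which relies on the individual blocks being one-dimensional.

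With this block version in hand, I would apply it to the two-block factorization $q(z,\xi|x) = q_z(z|x) q_\xi(\xi|x)$ over the joint $p(z,\xi,x)$. Its unique minimizer on the support of $p(x)$ is then $q_z^\star(z|x) q_\xi^\star(\xi|x) = p(z|x) p(\xi|x)$, where $p(z|x) = \int p(z,\xi|x) d\xi$ and analogously for $p(\xi|x)$. Integrating this minimizer over $\xi$ pulls $p(z|x)$ out of the integral and leaves $\int p(\xi|x) d\xi = 1$, so the left-hand side of Eq.~\ref{eq: theorem I} equals $p(z|x)$. For the right-hand side I would apply Theorem~\ref{th: exact marginals} a second time, now to the trivially single-block model $q_z(z|x)$ on the marginal joint $p(z,x) = \int p(z,\xi,x) d\xi$; its unique minimizer on the support of $p(x)$ is again $p(z|x)$. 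Both sides of Eq.~\ref{eq: theorem I} therefore coincide.

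The only step that is not an immediate quotation of Theorem~\ref{th: exact marginals} is the block extension, but since the argument in Eq.~\ref{eq: proof I} never invokes the dimensionality of the individual factors I do not anticipate any real obstacle there. The rest of the argument is a one-line substitution followed by the trivial observation that probability densities normalize to one.
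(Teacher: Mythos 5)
Your proposal is correct and follows essentially the same route as the paper: the paper likewise invokes the two-block extension of Theorem~\ref{th: exact marginals} (asserted in the text immediately before the theorem) to identify the minimizer as $p(z|x)p(\xi|x)$, integrates out $\xi$ using normalization, and identifies the right-hand side minimizer as $p(z|x)$. Your explicit check that the proof of Theorem~\ref{th: exact marginals} carries over to vector-valued blocks is a welcome filling-in of a step the paper only asserts, but it does not change the argument.
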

\begin{proof}
\begin{align}
\int \arginf_{q} \mathcal{L}_{FA}[p(z,\xi,x),q(z,\xi|x)] \d\xi &=  \int p(z| x) p(\xi| x) \d\xi \notag \\
& = p(z| x) = \arginf_{q_z} \mathcal{L}_{FA}[p(z,x),q_z(z|x)]~,
\end{align}
where the first equality is a direct consequence of Theorem \ref{th: exact marginals}.
\end{proof}
Therefore, there is no need to explicitly model the conditional dependencies between $z$ and $\xi$ when the aim is to estimate $q_z(z|x)$. In practice, it is straightforward to obtain Monte Carlo estimates of the marginalized variational loss $\mathcal{L}_{FA}[p(z,x),q_z(z|x)]$, since a sample from $p(z,x)$ is obtained by `ignoring' $\xi$ from a sample from the full joint distribution. Conversely, marginalization in the reverse KL approach requires to either perform the challenging integration $p(z,x) = \int p(z,\xi,x) d\xi$ or to explicitly model the conditional dependencies between $z$ and $\xi$ and marginalize out $\xi$ from the resulting variational distribution. Note that Theorem \ref{th: consistent marginalization} does not hold in the case of reverse KL inference and, consequently, assuming conditional independence could severely bias the resulting marginalized posterior.

\section{Applications}
We begin this section with a direct comparison between amortized reverse VI and FAVI. In this comparison we approximate the variational posterior of a variational autoencoder and we compare the accuracy of the two variational posteriors. Subsequently, we discuss two applications where the reverse KL approach is not easily applicable. These applications involve large-scale likelihood-free marginalization of latent variables. In the first application we use FAVI to obtain a variational forecaster of chaotic time series. This is an example of a model-based problem since the dynamic equations are assumed to reliably describe the dynamics of real-world systems such as the earth's atmosphere. In the second application we apply FAVI to the model-free problem of meta-classification. In this case, predictive performance is obtained by marginalizing over the posterior distribution of a weakly structured ensemble of random classification models that span a very large space of possible classification problems.

\subsection{Comparison between amortized inference methods}
In order to compare FAVI with other amortized inference approaches, we approximated the posterior distribution $p(\boldsymbol{z}|\boldsymbol{x})$ of the generative model
$$
p(\boldsymbol{x},\boldsymbol{z}) = \mathcal{N}\!\big(\boldsymbol{x}|\boldsymbol{f}_\mu(\boldsymbol{z}),\text{diag}(\boldsymbol{f}_\sigma(\boldsymbol{z}))\big) \mathcal{N}\!\big(\boldsymbol{z}|\boldsymbol{0},I\big)~,
$$
where $\boldsymbol{x}$ is a vector containing the intensity of the pixels of a black-and-white image and $\boldsymbol{z}$ is a vector of latent variables. The functions $\boldsymbol{f}_\mu(\boldsymbol{z})$ and $\boldsymbol{f}_\sigma(\boldsymbol{z})$ are the two outputs of a pre-trained deep neural network. The network has a three-layered fully connected architecture with ReLu nonlinearities in the hidden layers and was trained on the MNIST dataset using a variational autoencoder \cite{kingma2013auto}. We decided to use a common pre-trained generator in order to have a clean comparison between the performances of the approximate Bayesian inference methods. The variational posterior $q(\boldsymbol{z}|\boldsymbol{x})$ was parametrized by a three-layered fully connected architecture with ReLu nonlinearities. Both models trained with Adam \cite{kingma2014adam} for $100$ epochs with batch size $200$. The reverse KL inference network was trained by re-sampling MNIST images while FAVI was trained on simulated samples. This difference follows from the fact that the former is amortized with respect to $k(x)$ while the latter is amortized with respect to $p(x)$. We also included ALI \cite{dumoulin2016adversarially} in this comparison as an example of an adversarial likelihood-free method. 

\subsubsection{Results}
\begin{figure}[!t]
    \centering
    \includegraphics[width=1\textwidth]{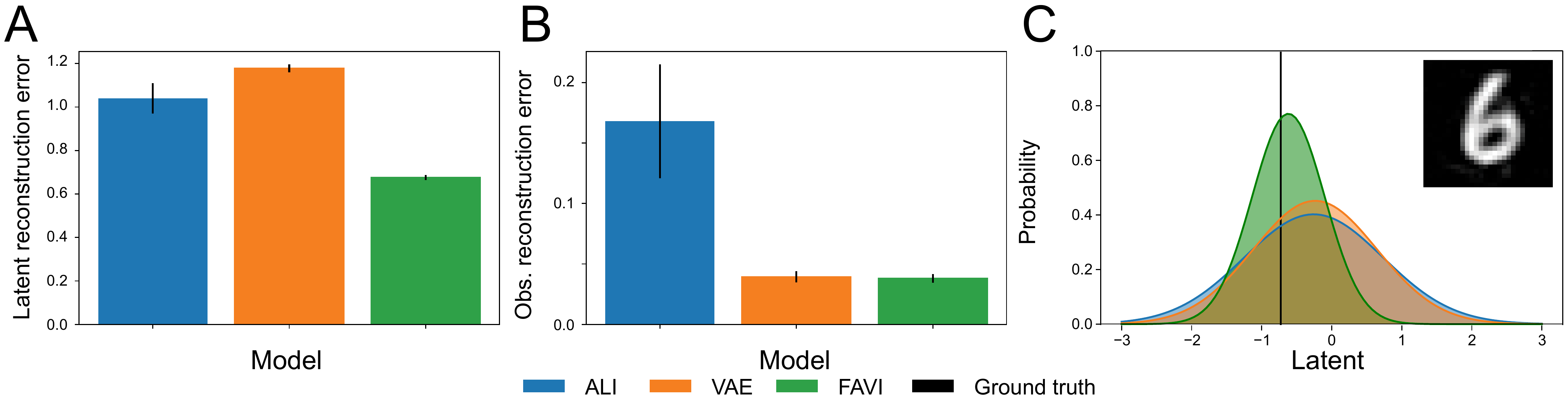}
    \caption{Comparison of the performance of the variational inference methods on the MNIST dataset. \textbf{A.} Reconstruction error of the latent variables. \textbf{B.} Reconstruction error of the images. \textbf{C.} Example of variational distributions given the (synthetic) image shown in the upper right corner. The black line denotes the real value of the latent variable.}
    \label{fig:autoencoder}
\end{figure}

The latent reconstruction error was quantified as respectively
\begin{equation}
\mean{\frac{1}{N}\sum_{j=1}^N(z_j - \hat{z}_j)^2}{q(\hat{\boldsymbol{z}}|\boldsymbol{x}) p(\boldsymbol{x},\boldsymbol{z})} ~~ \text{and} ~~ \mean{\frac{1}{M}\sum_{j=1}^M(x_j - \hat{x}_j)^2}{p(\hat{\boldsymbol{x}}|\hat{\boldsymbol{z}})q(\hat{\boldsymbol{z}}|\boldsymbol{x})k(\boldsymbol{x})}~,
\end{equation}
where $N$ is the dimension of the latent space and $M$ is the number of pixels. We tested the statistical difference between the errors using two-sample t-tests. Figure~\ref{fig:autoencoder}A shows the reconstruction error of the latent variable given a generated image. As we can see, FAVI has a remarkably lower latent reconstruction error when compared with reverse KL VI (p < 0.001). The latent error of ALI is slightly smaller than the error of reverse KL VI (p < 0.001). The superior performance of FAVI could have been expected since FAVI is trained on generated images while the reverse KL method is trained directly on real data. However, FAVI also has a slightly lower and less variable observable reconstruction error (p < 0.05). This can be seen in Fig.~\ref{fig:autoencoder}B. Conversely, ALI has a very high reconstruction error. Figure~\ref{fig:autoencoder}B shows the two variational distributions of the first component of the latent vector for an example image.

\subsection{Bayesian variational forecaster}
Forecasting the future of a dynamical system based on past noisy measurements and a system of dynamic equations is crucial for many scientific applications \cite{west1996bayesian}. The most well-known of these applications is arguably weather forecasting \cite{gneiting2005weather}. FAVI is particularly appropriate for dynamic forecasting problems for three main reasons. First, in these problems the generator is known with good accuracy and this benefits approaches like FAVI where the training samples are sampled from the generator. Second, it is often difficult to obtain analytic expressions for the probability densities of the dynamic and the noise models. Third, forecasting highly benefits from the marginalization of nuisance variables and unknown parameters \cite{gneiting2005weather}. We validated our FAVI forecaster on a simulated dataset. We generated chaotic time series using a very simplified model of atmospheric convection: the Lorentz dynamical system \cite{lorenz1963deterministic}. The system is given by the following differential equations:
\begin{align*}
\dot{x}_1(t) &= 10~\big(x_2(t) - x_1(t)\big) \\
\dot{x}_2(t) &= x_1(t) \big(28 - x_3(t)\big) - x_2(t) \\
\dot{x}_3(t) &= x_1(t) x_2(t) - 8/3 x_3(t)~,
\label{eq: lorentz dynamical system} 
\end{align*}
where the dot denotes a derivative with respect to time. In our case, the task is to estimate the probability of the value of $x_1$ at the future time point $t^*$ given a set of $M$ noise-corrupted observations $Y = \{y(t_0),...,y(t_M)\}$ where 
\begin{equation}
y(t) \sim \mathcal{N}\!(x_1(t), 10^2)~.
\end{equation}
Note that the variables $x_2$ and $x_3$ are not observed and need to be marginalized out. The graphical model of the complete and marginalized joint is given in Fig.~\ref{fig:forecaster}A. The FAVI loss is given by:
\begin{equation}\label{eq: forecaster FAVI loss}
\mathcal{L}_{FA} = -\mean{\log{q\left(x(t^{*})|Y\right)}}{p\left(x(t^{*}),Y\right)}~.
\end{equation}
We parametrized $q\left(x(t^{*})|Y\right)$ using a dilated convolutional neural network \cite{yu2015multi} with a kernel mixture network output \cite{ambrogioni2017kernel}, the details of the architecture are given in Appendix B.

\subsubsection{Results}
We compared the performance of our variational Bayesian forecaster against the extended Kalman filter (EKF), one of the most popular off-the-shelf dynamic forecasting methods~\cite{evensen2009data}. Specifically, we used the EKF for obtaining the joint posterior probability density of each variable at the last time point $t_M$ given the observations. By construction of the EKF approximation, this probability is a multivariate normal distribution. We made a forecast by numerically integrating $500$ time series from $t_M$ to $t^{*}$, where the initial conditions were sampled from the EKF posterior density at $t_M$. Figure~\ref{fig:forecaster}B shows the forecast of a randomly sampled example trial together with the ground truth. The predictive distribution of the Bayesian variational forecaster is tightly tracking the ground truth. Interestingly, the variational posterior bifurcates into the two possible `wings' of the Lorentz attractor. For each validation trial the performances of the EKF and the variational Bayesian forecaster were quantified as the probability of $x_1(t^{*})$ being inside a symmetric interval centered around the ground truth with radius $3$. In the EKF case this probability was obtained by counting the number of samples inside the interval and dividing by the total number of samples, while in the case of the Bayesian variational forecaster the probability was obtained by integrating the variational posterior probability density inside the interval. Figure~\ref{fig:forecaster}C shows the scatter plot of these probabilities for 500 validation trials. On average the performance of the variational Bayesian forecaster is $1.94$ times higher than the performance of the EKF. 

\begin{figure}[!t]
	\includegraphics[width=\textwidth]{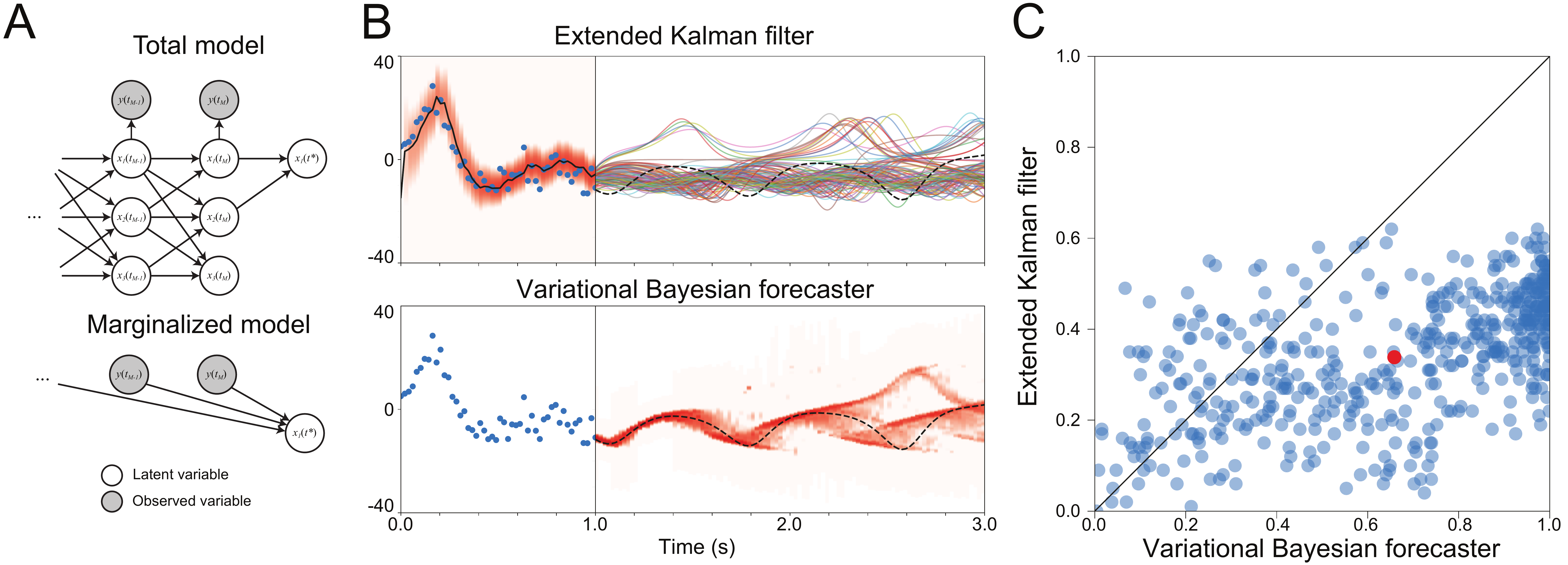}
	\caption{\textbf{A.} The total and marginalized generative models for forward autoencoders. \textbf{B.} EKF (top panel) and variational (bottom panel) forecast of a time series sampled from the Lorentz system. The blue dots denote the noise-corrupted observations. \textbf{C.} Forecast of a Lorentz dynamical system. The blue dots are individual simulated trials and the red dot denote the mean (center of mass).}
    \label{fig:forecaster}
\end{figure}


\subsection{Bayesian variational meta-classifier}
We now introduce a real-wold application that showcases the flexibility and scalability of FAVI when the real generative model is unknown. Our aim is to construct a Bayesian meta-classifier as an amortized variational approximation of the Bayes optimal classifier of an ensemble. Conventional variational methods are not suited for this task as they would need to introduce a variational distribution over the potentially infinite and unstructured model space and explicitly marginalize over the resulting posterior. Furthermore, the model likelihood $p(D|M_k)$ is very often non-differentiable and even impossible to evaluate in closed form. The lack of differentiability would rule out adversarial variational methods. We begin by giving a brief introduction to ensemble methods and Bayes optimal classifiers.

\subsubsection{Bayesian ensembles}

\begin{figure}[!t]
	\includegraphics[width=\textwidth]{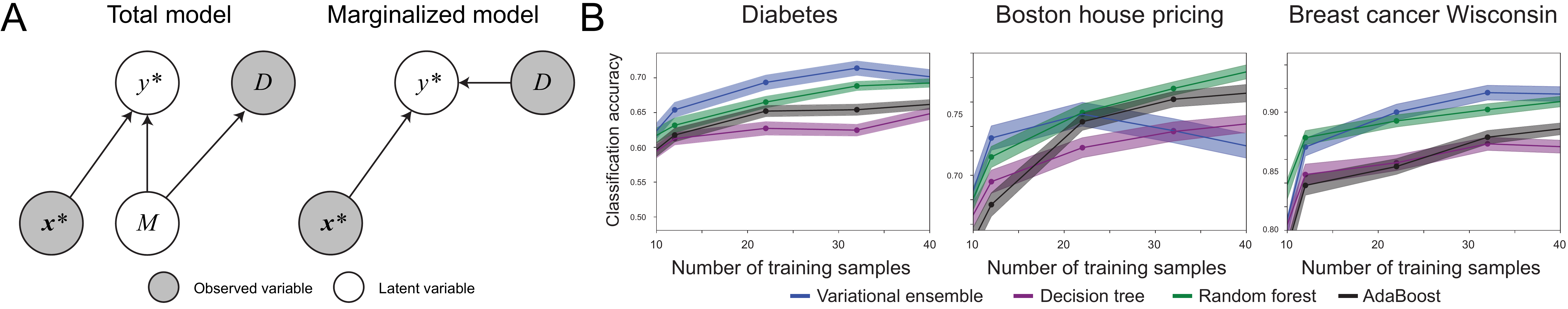}
	\caption{\textbf{A.} The total and marginalized generative models for the Bayesian meta-classifier. \textbf{B.} The classification accuracy of the Bayes variational meta-classifier versus three common alternatives on three data sets.}
    \label{fig:meta-classifier}
\end{figure}

In a classification task the aim is to estimate the probability of the target class assignments $y$ given a set of predictors $\boldsymbol{x}$. In an ensemble learning setting we assume that the classification task is sampled from a predefined family of classification models $M_1, M_2, ..., M_K$. In our notation we consider two models with the same parametric form, but different parameter values as different models. An ensemble classifier has the following form:
\begin{equation}
p(y^*|\boldsymbol{x}^*, D) = \sum_{k=1}^K w_k(D) p(y^*|\boldsymbol{x}^*, M_k)
  \label{eq: ensemble classifier}~, 
\end{equation}
where $x^*$ denotes a new vector of predictors, $y^*$ denotes the corresponding label and $D$ denotes the training data. Different ensemble models use different techniques for setting the weights $w_k(D)$. The optimal way of setting the weights $w_k(D)$ can be obtained formally using Bayes' rule. The posterior probability of each model $M_k$ given the training data is given by:
\begin{equation}
p(M_k|D) = \frac{p(D|M_k) p(M_k)} {p(D)}~,
  \label{eq: Bayesian posterior} 
\end{equation}
where $D$ is a training set of predictors $\boldsymbol{x}$ and target class assignments $y$. Assuming that we know the prior over the family of classification models, the optimal solution to the classification problem is given by marginalizing the posterior distribution $p(y|\boldsymbol{x})$ over all models $M_1, M_2, ..., M_K$ \cite{mitchell1997machine}. This is known as the Bayesian optimal classifier:
\begin{equation}
p(y^*|\boldsymbol{x}^*, D)=\sum_{k=1}^K p(y^*|\boldsymbol{x}^*, M_k) p(M_k|D)
  \label{eq: Bayes optimal classifier}~. 
\end{equation}
In practice, computing the Bayesian optimal classifier is intractable as it involves a sum (or an integral) over the whole (usually infinite) ensemble of models.

\subsubsection{Variational meta-classifier}
A Bayesian variational meta-classifier can be obtained by approximating the Bayesian optimal classifier using FAVI. The model is amortized with respect to whole training sets consisting of feature/label pairs, which are assumed to be generated by one (and only one) of the models in the ensemble. The resulting amortized posterior model is a meta-classifier, as it takes as input a training set and it outputs the predictive distribution over the label of an arbitrary new data-point. The forward amortized loss is given by:
$$
\mathcal{L}_{FA}[q] = -\mean{q(y^*|x^*,D)}{p(y^*,x^*,D)}~,
$$
where
$$
p(y^*,x^*,D) = \sum_k p(y^*,x^*,D| M_k) p(M_k)~.
$$
The graphical models of both the total and the marginalized joint are shown in Fig.~\ref{fig:meta-classifier}A. We trained a RNN using the FAVI loss in order to approximate the predictive distribution $p(x|y)$. Our variational posterior is given by: 
\begin{equation}
q(y^* = 1|\boldsymbol{x}^*) = \textrm{RNN}(\boldsymbol{x}^*;D)~,
\label{eq: approximate posterior ||} 
\end{equation}
where $\textrm{RNN}(\boldsymbol{x}^*; D)$ is a recurrent architecture that has received as input a training set $D$ of training pairs $(\boldsymbol{x},y)$. The details of our RNN architecture are given in Appendix C. 

\subsubsection{Results}
We trained a Bayes variational meta-classifier model on the ensemble of generative models described in Appendix D. The network was trained on binary classification with $10$ predictors. The Chainer deep learning framework \cite{tokui2015chainer} was used for model training. After training the model was tested separately on three public real-world datasets: the Boston house-prices dataset, the diabetes dataset and the breast cancer Wisconsin dataset \cite{harrison1978hedonic, efron2004least, street1993nuclear}. In all datasets only the first $10$ predictors were used. The Boston dataset is a regression problem but we converted it into a classification problem by replacing the value of the output variable with label $0$ if it was less than the total median or with label $1$ otherwise. The datasets contained $506$, $442$ and $569$ data points, respectively. However, in order to reliably evaluate the model performance on small data, in each dataset we sampled data subsets of length $N$ (from $N = 12$ to $N = 42$) at random. The model was tested by making a prediction for the $(N+1)$-th sample. The sampling and testing was repeated $500$ times for different re-samplings of the full dataset and the model performance scores were averaged. The model performance was compared to three other models: random forest, AdaBoost and decision trees \cite{freund1999short, breiman2001random, quinlan1986induction, dietterich2000ensemble}. Our experiments show that the Bayesian variational meta-classifier is competitive when compared to other ensemble approaches, achieving the best performances in diabetes and breast cancer datasets (Fig.~\ref{fig:meta-classifier}B). In the house pricing dataset the Bayesian variational meta-classifier has competitive performance when the training set is smaller than $20$ data-points, but the performance degrades for higher number of training samples. This decline in performance is likely to be caused by the limitations of our recurrent architecture. Note that the variational meta-classifier is applied to each dataset without any further training, while the other methods are trained separately on each dataset. 

\section{Conclusions}
In this paper we introduced a likelihood-free variational method based on the minimization of the forward KL divergence between the model joint distribution and a factorized variational joint distribution. We focused our exposition on variational marginalization problems where a Bayesian predictive distribution is obtained by marginalizing over a large space of latent variables. 

\bibliographystyle{unsrtnat}
\bibliography{reference}

\begin{thebibliography}{53}
\providecommand{\natexlab}[1]{#1}
\providecommand{\url}[1]{\texttt{#1}}
\expandafter\ifx\csname urlstyle\endcsname\relax
  \providecommand{\doi}[1]{doi: #1}\else
  \providecommand{\doi}{doi: \begingroup \urlstyle{rm}\Url}\fi

\bibitem[Jordan et~al.(1999)Jordan, Ghahramani, Jaakkola, and
  Saul]{jordan1999introduction}
M.~I. Jordan, Z.~Ghahramani, T.~S. Jaakkola, and L.~K. Saul.
\newblock An introduction to variational methods for graphical models.
\newblock \emph{Machine Learning}, 37\penalty0 (2):\penalty0 183--233, 1999.

\bibitem[Hoffman et~al.(2013)Hoffman, Blei, Wang, and
  Paisley]{hoffman2013stochastic}
M.~D. Hoffman, D.~M. Blei, C.~Wang, and J.~Paisley.
\newblock Stochastic variational inference.
\newblock \emph{The Journal of Machine Learning Research}, 14\penalty0
  (1):\penalty0 1303--1347, 2013.

\bibitem[Ranganath et~al.(2014)Ranganath, Gerrish, and
  Blei]{ranganath2014black}
R.~Ranganath, S.~Gerrish, and D.~Blei.
\newblock Black box variational inference.
\newblock \emph{International Conference on Artificial Intelligence and
  Statistics}, 2014.

\bibitem[Rezende et~al.(2014)Rezende, Mohamed, and
  Wierstra]{rezende2014stochastic}
D.~J. Rezende, S.~Mohamed, and D.~Wierstra.
\newblock Stochastic backpropagation and approximate inference in deep
  generative models.
\newblock \emph{International Conference on Machine Learning}, 2014.

\bibitem[Kucukelbir et~al.(2017)Kucukelbir, Tran, Ranganath, Gelman, and
  Blei]{kucukelbir2017automatic}
A.~Kucukelbir, D.~Tran, R.~Ranganath, A.~Gelman, and D.~M. Blei.
\newblock Automatic differentiation variational inference.
\newblock \emph{The Journal of Machine Learning Research}, 18\penalty0
  (1):\penalty0 430--474, 2017.

\bibitem[Tran et~al.(2016)Tran, Kucukelbir, Dieng, Rudolph, Liang, and
  Blei]{tran2016edward}
D.~Tran, A.~Kucukelbir, A.~B. Dieng, M.~Rudolph, D.~Liang, and D.~M. Blei.
\newblock Edward: A library for probabilistic modeling, inference, and
  criticism.
\newblock \emph{arXiv preprint arXiv:1610.09787}, 2016.

\bibitem[Kingma and Welling(2013)]{kingma2013auto}
D.~P. Kingma and M.~Welling.
\newblock Auto-encoding variational {B}ayes.
\newblock \emph{arXiv preprint arXiv:1312.6114}, 2013.

\bibitem[Husz{\'a}r(2017)]{huszar2017variational}
F.~Husz{\'a}r.
\newblock Variational inference using implicit distributions.
\newblock \emph{arXiv preprint arXiv:1702.08235}, 2017.

\bibitem[Ritchie et~al.(2016)Ritchie, Horsfall, and Goodman]{ritchie2016deep}
D.~Ritchie, P.~Horsfall, and N.~D. Goodman.
\newblock Deep amortized inference for probabilistic programs.
\newblock \emph{arXiv preprint arXiv:1610.05735}, 2016.

\bibitem[Dumoulin et~al.(2017)Dumoulin, Belghazi, Poole, Mastropietro, Lamb,
  Arjovsky, and Courville]{dumoulin2016adversarially}
V.~Dumoulin, I.~Belghazi, B.~Poole, O.~Mastropietro, A.~Lamb, M.~Arjovsky, and
  A.~Courville.
\newblock Adversarially learned inference.
\newblock \emph{International Conference on Learning Representations}, 2017.

\bibitem[Gneiting and Raftery(2005)]{gneiting2005weather}
T.~Gneiting and A.~E. Raftery.
\newblock Weather forecasting with ensemble methods.
\newblock \emph{Science}, 310\penalty0 (5746):\penalty0 248--249, 2005.

\bibitem[Prokhorov et~al.(2002)Prokhorov, Feldkarnp, and
  Tyukin]{prokhorov2002adaptive}
D.~V. Prokhorov, L.~A. Feldkarnp, and I.~Y. Tyukin.
\newblock Adaptive behavior with fixed weights in {RNN}: an overview.
\newblock \emph{International Joint Conference on Neural Networks}, 3, 2002.

\bibitem[Santoro et~al.(2016)Santoro, Bartunov, Botvinick, Wierstra, and
  Lillicrap]{santoro2016meta}
A.~Santoro, S.~Bartunov, M.~Botvinick, D.~Wierstra, and T.~Lillicrap.
\newblock Meta--learning with memory--augmented neural networks.
\newblock \emph{International Conference on Machine Learning}, 2016.

\bibitem[Vinyals et~al.(2016)Vinyals, Blundell, Lillicrap, and
  Wierstra]{vinyals2016matching}
O.~Vinyals, C.~Blundell, T.~Lillicrap, and D.~Wierstra.
\newblock Matching networks for one shot learning.
\newblock \emph{Advances in Neural Information Processing Systems}, 2016.

\bibitem[Bishop(2006)]{christopher2016pattern}
C.~M. Bishop.
\newblock \emph{Pattern Recognition and Machine Learning}.
\newblock Springer, 2006.

\bibitem[Minka(2001)]{minka2001expectation}
T.~P. Minka.
\newblock Expectation propagation for approximate {B}ayesian inference.
\newblock \emph{Uncertainty in Artificial Intelligence}, 2001.

\bibitem[Barthelm{\'e} and Chopin(2011)]{barthelme2011abc}
S.~Barthelm{\'e} and N.~Chopin.
\newblock {ABC-EP}: Expectation propagation for likelihood-free {B}ayesian
  computation.
\newblock \emph{Internetional Conference on Machine Learning}, pages 289--296,
  2011.

\bibitem[Tavar{\'e} et~al.(1997)Tavar{\'e}, Balding, Griffiths, and
  Donnelly]{tavare1997inferring}
S.~Tavar{\'e}, D.~J. Balding, R.~C. Griffiths, and P.~Donnelly.
\newblock Inferring coalescence times from {DNA} sequence data.
\newblock \emph{Genetics}, 145\penalty0 (2):\penalty0 505--518, 1997.

\bibitem[Pritchard et~al.(1999)Pritchard, Seielstad, Perez-Lezaun, and
  Feldman]{pritchard1999population}
J.~K. Pritchard, M.~T. Seielstad, A.~Perez-Lezaun, and M.~W. Feldman.
\newblock Population growth of human y chromosomes: a study of {Y} chromosome
  microsatellites.
\newblock \emph{Molecular Biology and Evolution}, 16\penalty0 (12):\penalty0
  1791--1798, 1999.

\bibitem[Tran et~al.(2017{\natexlab{a}})Tran, Nott, and
  Kohn]{tran2017variational}
M.~N. Tran, D.~J. Nott, and R.~Kohn.
\newblock Variational {B}ayes with intractable likelihood.
\newblock \emph{Journal of Computational and Graphical Statistics}, 26\penalty0
  (4):\penalty0 873--882, 2017{\natexlab{a}}.

\bibitem[S. et~al.(2018)S., S., and Z.]{shi2018kernel}
Jiaxin S., Shengyang S., and Jun Z.
\newblock Kernel implicit variational inference.
\newblock \emph{International Conference on Learning Representations}, 2018.

\bibitem[Blum and Fran{\c{c}}ois(2010)]{blum2010non}
M.~G.~B. Blum and O.~Fran{\c{c}}ois.
\newblock Non-linear regression models for approximate {B}ayesian computation.
\newblock \emph{Statistics and Computing}, 20\penalty0 (1):\penalty0 63--73,
  2010.

\bibitem[Papamakarios and Murray(2016)]{papamakarios2016fast}
G.~Papamakarios and I.~Murray.
\newblock Fast epsilon-free inference of simulation models with {B}ayesian
  conditional density estimation.
\newblock \emph{Advances in Neural Information Processing Systems}, pages
  1028--1036, 2016.

\bibitem[Le et~al.(2017)Le, Baydin, Zinkov, and Wood]{le2017using}
T.~A. Le, A.~G. Baydin, R.~Zinkov, and F.~Wood.
\newblock Using synthetic data to train neural networks is model-based
  reasoning.
\newblock \emph{International Joint Conference on Neural Networks}, 2017.

\bibitem[Jaderberg et~al.(2014)Jaderberg, Simonyan, Vedaldi, and
  Zisserman]{jaderberg2014synthetic}
M.~Jaderberg, K.~Simonyan, A.~Vedaldi, and A.~Zisserman.
\newblock Synthetic data and artificial neural networks for natural scene text
  recognition.
\newblock \emph{arXiv preprint arXiv:1406.2227}, 2014.

\bibitem[Jaderberg et~al.(2016)Jaderberg, Simonyan, Vedaldi, and
  Zisserman]{jaderberg2016reading}
M.~Jaderberg, K.~Simonyan, A.~Vedaldi, and A.~Zisserman.
\newblock Reading text in the wild with convolutional neural networks.
\newblock \emph{International Journal of Computer Vision}, 116\penalty0
  (1):\penalty0 1--20, 2016.

\bibitem[Gupta et~al.(2016)Gupta, Vedaldi, and Zisserman]{gupta2016synthetic}
A.~Gupta, A.~Vedaldi, and A.~Zisserman.
\newblock Synthetic data for text localisation in natural images.
\newblock \emph{Proceedings of the IEEE Conference on Computer Vision and
  Pattern Recognition}, pages 2315--2324, 2016.

\bibitem[Stark et~al.(2015)Stark, Haz{\i}rbas, Triebel, and
  Cremers]{stark2015captcha}
F.~Stark, C.~Haz{\i}rbas, R.~Triebel, and D.~Cremers.
\newblock Captcha recognition with active deep learning.
\newblock \emph{Workshop New Challenges in Neural Computation}, page~94, 2015.

\bibitem[G{\"u}{\c{c}}l{\"u}t{\"u}rk et~al.(2016)G{\"u}{\c{c}}l{\"u}t{\"u}rk,
  G{\"u}{\c{c}}l{\"u}, van Lier, and van Gerven]{guccluturk2016convolutional}
Y.~G{\"u}{\c{c}}l{\"u}t{\"u}rk, U.~G{\"u}{\c{c}}l{\"u}, R.~van Lier, and
  M.~A.~J. van Gerven.
\newblock Convolutional sketch inversion.
\newblock \emph{European Conference on Computer Vision}, pages 810--824, 2016.

\bibitem[Ambrogioni et~al.(2017{\natexlab{a}})Ambrogioni, G{\"u}{\c{c}}l{\"u},
  Maris, and van Gerven]{ambrogioni2017estimating}
L.~Ambrogioni, U.~G{\"u}{\c{c}}l{\"u}, E.~Maris, and M.~van Gerven.
\newblock Estimating nonlinear dynamics with the {ConvNet} smoother.
\newblock \emph{arXiv preprint arXiv:1702.05243}, 2017{\natexlab{a}}.

\bibitem[Mescheder et~al.(2017)Mescheder, Nowozin, and
  Geiger]{mescheder2017adversarial}
L.~Mescheder, S.~Nowozin, and A.~Geiger.
\newblock Adversarial variational {B}ayes: {U}nifying variational autoencoders
  and generative adversarial networks.
\newblock \emph{arXiv preprint arXiv:1701.04722}, 2017.

\bibitem[Tran et~al.(2017{\natexlab{b}})Tran, Ranganath, and
  Blei]{tran2017hierarchical}
D.~Tran, R.~Ranganath, and David~M. Blei.
\newblock Hierarchical implicit models and likelihood-free variational
  inference.
\newblock \emph{arXiv preprint arXiv:1702.08896}, 2017{\natexlab{b}}.

\bibitem[Goodfellow et~al.(2014)Goodfellow, Pouget-Abadie, Mirza, Xu,
  Warde-Farley, Ozair, Courville, and Bengio]{goodfellow2014generative}
I.~Goodfellow, J.~Pouget-Abadie, M.~Mirza, B.~Xu, D.~Warde-Farley, S.~Ozair,
  A.~Courville, and Y.~Bengio.
\newblock Generative adversarial nets.
\newblock \emph{Advances in Neural Information Processing Systems}, pages
  2672--2680, 2014.

\bibitem[A. et~al.(2018)A., R., and Z.]{arora2018do}
Sanjeev A., Andrej R., and Yi~Z.
\newblock Do {GANs} learn the distribution? {S}ome theory and empirics.
\newblock \emph{International Conference on Learning Representations}, 2018.

\bibitem[Arora et~al.(2017)Arora, Ge, Liang, Ma, and
  Zhang]{arora2017generalization}
S.~Arora, R.~Ge, Y.~Liang, T.~Ma, and Y.~Zhang.
\newblock Generalization and equilibrium in generative adversarial nets
  ({GANs}).
\newblock \emph{arXiv preprint arXiv:1703.00573}, 2017.

\bibitem[Csill{\'e}ry et~al.(2010)Csill{\'e}ry, Blum, Gaggiotti, and
  Fran{\c{c}}ois]{csillery2010approximate}
K.~Csill{\'e}ry, M.~G.~B. Blum, O.~E. Gaggiotti, and O.~Fran{\c{c}}ois.
\newblock Approximate {B}ayesian computation ({ABC}) in practice.
\newblock \emph{Trends in Ecology \& Evolution}, 25\penalty0 (7):\penalty0
  410--418, 2010.

\bibitem[Marin et~al.(2012)Marin, Pudlo, Robert, and
  Ryder]{marin2012approximate}
J.~M. Marin, P.~Pudlo, C.~P. Robert, and R.~J. Ryder.
\newblock Approximate {B}ayesian computational methods.
\newblock \emph{Statistics and Computing}, 22\penalty0 (6):\penalty0
  1167--1180, 2012.

\bibitem[Murphy(2012)]{robert2014machine}
K.~P. Murphy.
\newblock \emph{Machine Learning, A Probabilistic Perspective}.
\newblock The MIT press, 2012.

\bibitem[Kingma and Ba(2014)]{kingma2014adam}
D.~P. Kingma and J.~Ba.
\newblock Adam: {A} method for stochastic optimization.
\newblock \emph{arXiv preprint arXiv:1412.6980}, 2014.

\bibitem[West(1996)]{west1996bayesian}
M.~West.
\newblock \emph{{B}ayesian {F}orecasting}.
\newblock Wiley Online Library, 1996.

\bibitem[Lorenz(1963)]{lorenz1963deterministic}
E.~N. Lorenz.
\newblock Deterministic nonperiodic flow.
\newblock \emph{Journal of the Atmospheric Sciences}, 20\penalty0 (2):\penalty0
  130--141, 1963.

\bibitem[Yu and Koltun(2015)]{yu2015multi}
F.~Yu and V.~Koltun.
\newblock Multi-scale context aggregation by dilated convolutions.
\newblock \emph{arXiv preprint arXiv:1511.07122}, 2015.

\bibitem[Ambrogioni et~al.(2017{\natexlab{b}})Ambrogioni, G{\"u}{\c{c}}l{\"u},
  van Gerven, and Maris]{ambrogioni2017kernel}
L.~Ambrogioni, U.~G{\"u}{\c{c}}l{\"u}, M.~A.~J. van Gerven, and E.~Maris.
\newblock The kernel mixture network: {A} nonparametric method for conditional
  density estimation of continuous random variables.
\newblock \emph{arXiv preprint arXiv:1705.07111}, 2017{\natexlab{b}}.

\bibitem[Evensen(2009)]{evensen2009data}
G.~Evensen.
\newblock \emph{{D}ata {A}ssimilation: {T}he {E}nsemble {K}alman {F}ilter}.
\newblock {S}pringer, 2009.

\bibitem[Mitchell(1997)]{mitchell1997machine}
T.~M. Mitchell.
\newblock \emph{Machine Learning}.
\newblock McGraw Hill, 1997.

\bibitem[Tokui et~al.(2015)Tokui, Oono, Hido, and Clayton]{tokui2015chainer}
S.~Tokui, K.~Oono, S.~Hido, and J.~Clayton.
\newblock Chainer: {A} next-generation open source framework for deep learning.
\newblock \emph{Workshop on Machine Learning Systems (NIPS)}, 5, 2015.

\bibitem[Harrison and Rubinfeld(1978)]{harrison1978hedonic}
David. Harrison and D.~L. Rubinfeld.
\newblock Hedonic housing prices and the demand for clean air.
\newblock \emph{Journal of Environmental Economics and Management}, 5\penalty0
  (1):\penalty0 81--102, 1978.

\bibitem[Efron et~al.(2004)Efron, Hastie, Johnstone, and
  Tibshirani]{efron2004least}
B.~Efron, T.~Hastie, I.~Johnstone, and R.~Tibshirani.
\newblock Least angle regression.
\newblock \emph{The Annals of Statistics}, 32\penalty0 (2):\penalty0 407--499,
  2004.

\bibitem[Street et~al.(1993)Street, Wolberg, and
  Mangasarian]{street1993nuclear}
W.~N. Street, W.~H. Wolberg, and O.~L. Mangasarian.
\newblock Nuclear feature extraction for breast tumor diagnosis.
\newblock \emph{{Biomedical Image Processing and Biomedical Visualization}},
  1905:\penalty0 861--871, 1993.

\bibitem[Freund et~al.(1999)Freund, Schapire, and Abe]{freund1999short}
Y.~Freund, R.~Schapire, and N.~Abe.
\newblock A short introduction to boosting.
\newblock \emph{Journal of the Japanese Society For Artificial Intelligence},
  14\penalty0 (771-780):\penalty0 1612, 1999.

\bibitem[Breiman(2001)]{breiman2001random}
L.~Breiman.
\newblock Random forests.
\newblock \emph{Machine Learning}, 45\penalty0 (1):\penalty0 5--32, 2001.

\bibitem[Quinlan(1986)]{quinlan1986induction}
J.~R. Quinlan.
\newblock Induction of decision trees.
\newblock \emph{Machine Learning}, 1\penalty0 (1):\penalty0 81--106, 1986.

\bibitem[Dietterich(2000)]{dietterich2000ensemble}
Thomas~G Dietterich.
\newblock Ensemble methods in machine learning.
\newblock \emph{International Workshop on Multiple Classifier Systems}, pages
  1--15, 2000.

\end{thebibliography}

\end{document}